\newcommand{\pbDef}[3]{%
\noindent
\begin{center}
\begin{boxedminipage}{0.99 \columnwidth}
#1
\smallskip\\
\begin{tabular}{l p{0.75 \columnwidth}}
Input: & #2\\
Question: & #3
\end{tabular}
\end{boxedminipage}
\end{center}
}
\newcommand{\reals}{\ensuremath{\mathbb{R}}}
\definecolor{blue}{RGB}{0, 93, 170}			
\definecolor{green}{RGB}{0, 102, 0}					
\definecolor{red}{RGB}{250,60,50}				
\newcommand{\OMIT}[1]{} 
\newtheorem{observation}[theorem]{Observation}
\newcommand{\ocmax}{\ensuremath{c^{O}_{max}}}
\newcommand{\ocmin}{\ensuremath{c^{O}_{min}}}
\newcommand{\acmax}{\ensuremath{c^{N}_{max}}}
\newcommand{\acmin}{\ensuremath{c^{N}_{min}}}
\newcommand{\wrs}{\ensuremath{\boldsymbol{\omega}}}
\newcommand{\owa}{\ensuremath{\boldsymbol{\alpha}}}
\newcommand{\all}{\ensuremath{\mathbb{A}}}
\newcommand{\uv}{\ensuremath{\textbf{u}}}
\newcommand{\xv}{\ensuremath{\textbf{x}}}
\newcommand{\maxv}{\ensuremath{\Delta}}
\newcommand{\sig}{\ensuremath{\boldsymbol{\sigma}}}
\newcommand{\sumowa}{$\Sigma$-OWA~}
\title{The Conference Paper Assignment Problem:\\ Using Order Weighted Averages to Assign Indivisible Goods}
\author{
Jing Wu Lian\inst{1} 
\and Nicholas Mattei\inst{2}
\and Renee Noble\inst{1} 
\and Toby Walsh\inst{3}
}
\institute{
Data61, CSIRO and UNSW, Sydney, Australia \\ \email{renee.noble@data61.csiro.au, lianjingwu@gmail.com}
\and
IBM T.J. Watson Research Center, New York, USA \\ \email{n.mattei@ibm.com}
\and
Data61, CSIRO, UNSW, and TU Berlin, Berlin, Germany \\ \email{toby.walsh@data61.csiro.au}}
\begin{document}

\maketitle

\begin{abstract}
Motivated by the common academic problem of allocating papers to referees for conference reviewing we propose a novel mechanism for solving the assignment problem when we have a two sided matching problem with preferences from one side (the agents/reviewers) over the other side (the objects/papers) and both sides have capacity constraints.  The assignment problem is a fundamental problem in both computer science and economics with application in many areas including task and resource allocation. We draw inspiration from multi-criteria decision making and voting and use order weighted averages (OWAs) to propose a novel and flexible class of algorithms for the assignment problem. We show an algorithm for finding an \sumowa assignment in polynomial time, in contrast to the NP-hardness of finding an egalitarian assignment.  Inspired by this setting we observe an interesting connection between our model and the classic proportional multi-winner election problem in social choice.
\end{abstract}

\section{Introduction}
Assigning indivisible items to multiple agents is a fundamental problem in many fields including computer science, economics and operations research. Algorithms for matching and assignment are used in a variety of application areas including allocating runways to airplanes, residents to hospitals, kidneys to patients \cite{DPS14a}, students to schools \cite{BuCa12a}, assets to individuals in a divorce, jobs to machines, and tasks to cloud computing nodes \cite{Man13a}.  Understanding the properties of the underlying algorithms is an important aspect to ensuring that all participating agents are happy with their allocations and do not attempt to misrepresent their preferences; a key area of study for computational social choice \cite{BCELP16a}.

An area that is near to many academics' hearts is the problem of allocating papers to referees for peer review. The results of grant, journal, and conference reviewing can have significant impact on the careers of scientists. Ensuring that papers and proposals are reviewed by the most qualified/interested referees most is part of ensuring that items are treated properly and all participants support the outcome of the processes.  Making sure these processes work for both the proposers and the reviewers is important and methods for improving peer review have been proposed and discussed in AI \cite{PrFl16} and broadly across the sciences \cite{MeSa09a}. 

There are a number of ways one can improve the quality of peer review \cite{PrFl16}.  First is to ensure that reviewers are not incentivized to misreport their reviews for personal gain.  Along this line there has been significant interest recently in strategyproof mechanisms for peer review \cite{ALMR+15a}.  Unfortunately, the method that we discuss in this paper is not strategyproof.  Another way is to ensure that reviewers are competent to provide judgements on the papers they are assigned.  The Toronto Paper Matching System \cite{ChZe13a} is designed to improve the process from this paper-centric model.  A third alternative, and the one we focus on in this study, is ensuring that reviewers are happy with the papers they are asked to review.  This is fundamentally a question about the optimization objectives of the assignment functions used.

Formally, we study the Conference Paper Assignment Problem (CPAP) \cite{GoSl07a} which is a special of the Multi-Agent Resource Allocation Problem (MARA) \cite{BoChLa16}, and propose a novel assignment, the \sumowa assignment.  In the CPAP setting we have a two-sided market where on one side the agents/reviewers have preferences over the other side, the objects/papers, and both sides have (possibly infinite) upper and lower capacities. A fundamental tension in assignment settings is the tradeoff between maximizing the social welfare, also know as the utilitarian maximal assignment and the Rawlsian \cite{Rawls71a} fairness concept of maximizing the utility of the worst off agent, known as the egalitarian maximal assignment. These two ideas are incompatible optimization objectives and diverge in a computational sense as well: computing the utilitarian assignment for additive utilities can be done in polynomial time, while computing the egalitarian assignment is NP-complete \cite{DeHi88a}.  This, perhaps, could be the reason that implementers of large conference paper assignment software often opt for utilitarian assignments, as is supposedly the case for EasyChair \cite{GKKM+10a}.\footnote{This is technically unsubstantiated as when the authors contacted EasyChair to understand the assignment process we were told, \emph{``We do not provide information on how paper assignment in EasyChair is implemented. The information in Garg et.al. may be incorrect or out of date - none of the authors worked for EasyChair, they also had no access to the EasyChair code.''}}  However, it is also not clear if an egalitarian assignment is desirable for CPAP.

\smallskip
\noindent
\textbf{Contributions.}
We establish a motivation for using OWA vectors in the assignment setting and define a novel notion of allocation, the \sumowa assignment.
We give algorithm to compute an \sumowa maximal assignment in polynomial time and we show that the \sumowa objective generalizes the utilitarian objective.
We show that \sumowa assignments satisfy a notion of Pareto optimality w.r.t. the pairwise comparisons of the objects by the agents.
We implement an algorithm for \sumowa assignments and perform experiments on real world conference paper assignment data.

\section{Preliminaries}

From here we will use the more general notation agents/objects to describe our setting.  In assignment settings each agent provides their preference over the objects as a reflexive, complete, and transitive preference relation (weak order) over the set of objects, $\succsim_i$.  We do not assume that $\succsim_i$ is complete; it is possible that some agents may have conflicts of interest or have no preference for a particular object; this assumption is often called ``having unacceptable objects'' in the literature \cite{Man13a}.

In many real-world CPAP settings there are a fixed number of equivalence classes into which agents are asked to place the objects \cite{MaWa13a}.  We assume that the number of equivalence classes (ranks) of objects are given as input to the problem and agents tell us within which rank each objects belong.  Agents also provide a decreasing utility value for each rank\footnote{We assume that agents can give any utilities as input.  However, often the utilities are restricted to be the same, i.e., Borda utilities in conference paper bidding, or come from some fixed budget, i.e., bidding fake currency as in course allocation at Harvard \cite{BuCa12a}.}. Our main result can be extended to the case where the number of equivalence classes is not fixed.

Formally, the CPAP problem is defined by $(N, O, \succsim, \uv, \maxv)$: a set of $n$ agents $N = \{a_1, \ldots, a_n\}$; a set of $m$ objects $O = \{o_1, \ldots, o_m\}$; for each $i \in N$, a reflexive and transitive preference relation (weak order) over the set of objects, $\succsim_i$, divided into $\maxv$ equivalence classes (ranks); and for each $i \in N$ a utility vector $\uv_i$ is of length $\maxv$ and assigns a decreasing utility $\uv_i(k) \rightarrow \reals$ for each $k \in [1, \maxv]$, i.e., $\uv_i(1) > \uv_i(2) > \ldots > \uv_i(\maxv)$.  Let $r_i(j)$ be the rank of object $j$ for $i$ and $\uv_i(r_i(j))$ denote the value of $i$ for $j$.

\subsection{Side Constraints and Feasible Assignments}

There are two practical constraints that we include in our model, making our model more general than the standard MARA or CPAP problems studied in computer science \cite{BoChLa16}: upper and lower capacities on both the agents and objects. 

\textbf{Agent Capacity:} each agent $i \in N$ has (possibly all equal) upper and lower bound on their capacity, the number of objects they can be allocated, $\acmin(i)$ and $\acmax(i)$.  

\textbf{Object Capacity:} each object $j \in O$ has a (possibly all equal) upper and lower bound on the number of agents assigned to it, $\ocmin(j)$ and  $\ocmax(j)$, respectively.  

We can now define a feasible assignment $A$ for an instance $(N, O, \succ, \uv, \maxv)$. For a given assignment $A$, let $A(i, :)$ denote the set of objects assigned to agent $i$ in $A$, let $A(:, j)$ denote the set of agents assigned to object $j$, and let $|\cdot|$ denote the size (number of elements) of a set or vector.  A feasible assignment $A$ must obey:
\begin{align*}
[\forall i \in N: \acmin(i) \leq |A(i, :)| \leq \acmax(i)]  \wedge
[\forall j \in O:  \ocmin(j) \leq |A(:, j)| \leq \ocmax(j)].
\end{align*}
We write the set of all feasible assignments for an instance as $\all(N, O, \succ, \uv, \maxv)$.\footnote{We will omit the arguments when they are clear from context.}

\subsection{Individual Agent Evaluation}

We first formalize how an individual agent evaluates their assigned objects. Each feasible assignment $A \in \all$ gives rise to a \emph{signature vector} for each agent $i \in N$; intuitively the signature vector is the number of objects at each rank assigned to $i$.  Formally let $\sig_{i}(A) = (\sig_{i,1}(A), \ldots, \sig_{i,\maxv}(A))$ where $\sig_{i,l}(A) = | \{j  \in A(i, :)  | r_i(j) = l \} |$ for each $l \in [1, \ldots, \maxv]$.

For indivisible (discrete) objects the lexicographic relation can be modeled by the additive utility relation by setting the agent utilities to high enough values. Formally, if the utility for rank $i < j$ is $\uv(i) > \uv(j) \cdot m$ then the lexicographic and additive utility relations are the same, i.e., no matter how many additional objects of rank $j$ the agent receives, one additional object of rank $i$ is more preferred.  We now define the relations that a referee might consider between assignments A and B.

\textbf{Lexicographic:} An agent $i$ lexicographically prefers $A$ to $B$ if $\sig_i(A)$ comes before $\sig_i(B)$ in the lexicographic order.  That is, there is an index $1 \geq l \geq \maxv$ such that for all $k > l$ we have $\sig_{i,k}(A) = \sig_{i,k}(B)$ and $\sig_{i,l} (A) > \sig_{i, l} (B)$; i.e., $i$ receives at least one more paper of a higher rank in $A$ than in $B$.  The lexicographic relation over vectors has a long history in the assignment literature \cite{Fish74a}.  

\textbf{Additive Utility:} An agent $i$ prefers assignment $A$ to $B$ if he has more additive utility for the objects assigned to him in $A$ than in $B$.  Formally, (and slightly abusing notation)
$
\uv_i(A) = \sum_{j \in A(i, :)} \uv_i (r_i(j))  > \sum_{j \in B(i, :)} \uv_i (r_i(j))$, or an alternative formulation using the dot product, $\uv_i(A) = \uv_i \cdot \sig_i(A) > \uv_i \cdot \sig_i(B).$

\subsection{Overall Assignment Evaluation}
In the literature there are several optimization objectives defined over an assignment that an implementer may wish to consider.  We limit our discussion to the two classical notions below.
Additional discussion of objectives, including the imposition of various fairness criteria for the CPAP setting can be found in \citet{GKKM+10a} and for the MARA setting see e.g., \citet{BoLe16a}.

\textbf{Utilitarian Social Welfare Maximal Assignment:} Often called the utilitarian assignment, we want to maximize the total social welfare over all the agents. An assignment is a utilitarian assignment if it satisfies: 
\begin{align*} 
& \arg\max_{A \in \all} \sum_{i \in N} \sum_{ j \in A(i,:)} \uv_i (r_i(j)) = \arg\max_{A \in \all} \sum_{ i \in N} \uv_i \cdot \sig_i(A).
\end{align*} 

\textbf{Egalitarian Social Welfare Maximal Assignment:} Often called the egalitarian assignment, we want to enforce the Rawlsian notion of fairness by making sure that the worst off referee is as happy as  possible, i.e., maximize the utility of the least well off agent.  Formally, 
$$\arg\max_{A \in \all} \min_{i \in N} \sum_{j \in A(i, :)} \uv_i (r_i(j)) =
\arg\max_{A \in \all}\min_{i \in N} \uv_i \cdot \sig_i(A).$$

In the discrete MARA and CPAP setting where objects are not divisible, the problem of finding an egalitarian assignment is NP-hard \cite{DeHi88a} while finding a utilitarian assignment can be done in polynomial time \cite{BoChLa16}.

\section{Background and Related Work}

One and two sided matching and assignment problems have been studied in economics \cite{RS92a} and computer science \cite{Man13a,BCELP16a} for over 40 years.  Matching and assignment have many applications including kidneys exchanges \cite{DPS14a} and school choice \cite{APR05a}. Our problem is often called the multi-agent resource allocation (MARA) problem in computer science \cite{BoChLa16} The papers to referees formulation of this problem has some additional side constraints common in the economics literature, but not as common in computer science \cite{Man13a}.  In the economics literature the \textit{Workers-Firms} problem is the most closely related analogue to our problem, modeling many-many matchings with capacities \cite{KlMaRo16}.

The conference paper assignment has been studied a number of times over the years in computer science \cite{GoSl07a}, as has defining and refining notions of fairness for the assignment vectors in multi-agent allocation problems \cite{GoPe10a}. We build off the work of \citet{GKKM+10a}, who extensively study the notion of fair paper assignments, including lexi-min and rank-maximal assignments, within the context of conference paper assignment.  \citet{GKKM+10a} show that for the setting we study, finding an egalitarian optimal assignment and finding a leximin optimal assignment are both NP-hard when there are three or more equivalence classes; and polynomial time computable when there are only two.  They also provide an approximation algorithm for leximin optimal assignments. 
We know that if the capacity constraints are hard values, i.e., each reviewer must review $\leq x$ papers and each paper must receive exactly $y$ reviews, then the resulting version of capacitated assignment is NP-hard \cite{LWPY13a}.  Answer set programming for CPAP was studied by \citet{ADLR16a}; they encode the CPAP problem in ASP and show that finding a solution that roughly correspond to the leximin optimal and egalitarian solutions can be done in reasonable time for large settings ($\approx 100$ agents).

CPAP also receives considerable attention in the recommender systems \cite{CKR09a} and machine learning \cite{CZB12a} communities. Often though, this work takes the approach of attempting to infer a more refined utility or preference model in order to distinguish papers. Fairness and efficiency concerns are secondary.  A prime example of this is the Toronto Paper Matching System designed by \citet{ChZe13a}. This system attempts to increase the accuracy of the matching algorithms by having the papers express preferences over the reviewers themselves; where these preferences are inferred from the contents of the papers. 

We make use of Order weighted averages (OWAs), often employed in multi-criteria decision making \cite{Yager88a}.  OWAs have recently received attention in computational social choice for voting and ranking \cite{GLMP14a}, finding a collective set of items for a group \cite{SFL16a}, and multi-winner voting with proportional representation  \cite{EFSS14a,ElIs15a}. The key difference between CPAP and voting using OWAs in the ComSoc literature is that CPAP does not select a set of winners that all agents will share.  Instead, all agents are allocated a possibly disjoint set of objects. 

%
%

\section{\sumowa Assignments}
We now formally define OWAs and their use for defining assignment objectives.  We will discuss alternative formulations of \sumowa which have been studied.

An order weighted average (OWA) is a function defined for an integer $K$ as a vector $\owa^{(K)} = (\alpha_1, \ldots, \alpha_K)$ of $K$ non-negative numbers. Let $\xv = (x_1, \ldots, x_K)$ be a vector of $K$ numbers and let $\xv^{\downarrow}$ be the non-increasing rearrangement of $\xv$, i.e. $\xv^{\downarrow} = \xv^{\downarrow}_{1} \geq \xv^{\downarrow}_{2} \geq \ldots \geq \xv^{\downarrow}_{K}$.  Then we say: $$OWA_{\owa}(\xv) = \owa \cdot \xv^{\downarrow} =  \sum^{K}_{i=1} \alpha_i \cdot \xv^{\downarrow}_i.$$

In order to apply OWAs to our setting we need to define the \emph{weighted rank signature} of an assignment.  Let $\wrs_i(A)$ be defined as the sorted vector of utility that a referee gets from an assignment $A$.  Formally, $$ \wrs_i(A) = sort(\{\forall j \in A(i, :): \uv_i(r(j))\}).$$ For example, if $A(i, :)$ included two objects with utility $3$, one of utility $1$, and one of utility $0$, we would have $$\wrs_i(A) = (3, 3, 1, 0).$$

Our inspiration for applying OWAs comes from a multi-winner voting rule known as Proportional Approval Voting (PAV) \cite{Kilg10a,AGGM+15a,ABC+14c}.  In approval voting settings, each agent can approve of as many candidates as they wish.  Under the standard approval voting (AV) method, all approvals from each agent assign one point to the candidate for which they are cast.  However, this can lead to a number of pathologies described by \citet{AGGM+15a} and it intuitively does not seem fair; once a candidate that you like has been selected to the winning set your next candidate selected to the winning set should seemingly count less.  Hence in PAV, which is designed to be more fair \cite{ABC+14c}, a voter's first approval counts for a full point, the second for $\nicefrac{1}{2}$, the next for $\nicefrac{1}{3}$, and on as a harmonically decreasing sequence.  

Transitioning this logic to the CPAP setting, we were motivated to find a way to distribute objects to agents that increases the number of agents who receive their top ranked objects.  This is the logic of PAV: once you get a candidate into the winning set, you should count less until everyone else has a candidate in the winning set.  If we desire to directly get a rank maximal assignment, completely ignoring the utilities, then we know this is polynomial by a result from \citet{GKKM+10a}.  However, if we wish to modulate between using the utilities and using only the ranks, perhaps we can use OWAs. We use the sum over all agents of $OWA_{\owa}(\wrs) = \owa \cdot \wrs$ as the optimization criteria for the assignment.

In order to cleanly define this we need to place some restrictions on our OWA vectors.  Firstly, the length of $\owa$ needs to be at least as long as the maximum agent capacity, i.e., $|\owa| \geq \arg\max_{i \in N}(\acmax(i))$.  Typically the literature on OWAs assumes that $\owa$ is normalized, i.e., $\sum_{1 \leq i \leq K} \alpha_i = 1$. We do not enforce this convention as we wish to study the PAV setting with $\alpha = (1, \nicefrac{1}{2}, \ldots)$.  This is formally a relaxation and we observe that whether or not the OWAs are normalized does not affect our computational results.  However, we do require that our OWA vector be non-increasing and that each entry be $\geq 0$, i.e., for any $i,j \in |\owa|$, $i < j$ we have $\owa_i \geq \owa_j \geq 0$.

\pbDef{\textsc{\sumowa Assignment}}
{Given an assignment setting $(N, O, \succ, \uv, \maxv)$ with agent capacities $[\acmin(i), \acmax(i)]$ for all $i \in N$, and object capacities $[\ocmin(j), \ocmax(j)]$ for all $j \in O$, and a non-increasing OWA vector $\owa_i$ with $|\owa| \geq \max_{\forall_i \in N}(\acmax(i))$.}
{Find a feasible assignment $A$ such that  
$$A = \arg\max_{A \in \all} \sum^{|n|}_{i=1} \owa_i \cdot \wrs_i(A).$$ }

In our formulation, the OWA operator is applied to the vector of agent utilities and then we aggregate (or sum) these modified utilities to give the assignment objective.  Hence, the \sumowa name.  We observe that this formulation strictly generalizes the utilitarian assignment objective; if we set $\owa = (1)^{n}$ we recover the utilitarian assignment.  

One may also wish to consider applying the OWA over the sorted vector of total agent utility for their allocation, which one could call the OWA-$\Sigma$ version of our problem.  Indeed, this formulation of the problem has been considered before and proposed in the earliest writings on OWAs for decision making \cite{Yager88a}.  Taking the OWA-$\Sigma$ formulation allows one to recover both the utilitarian assignment, $\owa = (\nicefrac{1}{n},\ldots, \nicefrac{1}{n})$, as well as the egalitarian assignment, $\owa = (0,\ldots,0_{n-1},1)$.  However, because the OWA-$\Sigma$ formulation is a generalization of the egalitarian assignment, it becomes NP-hard in general \cite{DeHi88a}.

We think of the $\alpha$ vector as a kind of control knob given to the implementer of the market, allowing them to apply a sub-linear transform to the agent utilities.  This ability may be especially useful when agents are free to report their (normalized) utilities for ranks via bidding or other mechanisms \cite{BuCa12a}.  In many settings the  utility vector is controlled by the individual \emph{agents}, while the OWA vector is under the control of the \emph{market implementers}.  Consider the following example.

\begin{example}\label{ex:disjoint}
Consider a setting with four agents $N=\{a_1,a_2,a_3,a_4\}$ agents and four objects $O = \{o_1,o_2,o_3,o_4\}$.  For all agents let $\acmin = \acmax = 2$ and for all objects let $\ocmin = \ocmax = 2$.  For the \sumowa assignment, let $\alpha = (1, \nicefrac{1}{2})$.

\begin{center}
\begin{tabular}{ccccc}
\hline
		& $o_1$	& $o_2$	& $o_3$	& $o_4$ \\ \hline	
$a_1$	& 11	& 9		& 0		& 0		\\
$a_2$	& 8		& 8		& 2		& 2		\\ 
$a_3$	& 7		& 7		& 3		& 3		\\ 
$a_4$	& 6		& 6		& 4		& 4		\\ \hline
\end{tabular}
\end{center}
\noindent
We get the following allocations.\\
\noindent
Utilitarian: \\
\indent
$A(a_1, :) = \{o_1, o_2\}, u_1(A) = 20$; 
$A(a_2, :) = \{o_1, o_2\}, u_2(A) = 16$; \\
\indent
$A(a_3, :) = \{o_3, o_4\}, u_3(A) = 6$; 
$A(a_4, :) = \{o_3, o_4\}, u_4(A) = 8$; \\
\indent
$\sum_i u_i(A) = 50$.\\

OWA, $\alpha = (1, \nicefrac{1}{2})$:\\
\indent
$A(a_1, :) = \{o_1, o_2\}, u_1(A) = 20$, $\alpha \cdot \wrs_1 = 15.5$; \\
\indent
$A(a_2, :) = \{o_2, o_3\}, u_2(A) = 10$, $\alpha \cdot \wrs_2 = 9.0$; \\
\indent
$A(a_3, :) = \{o_1, o_4\}, u_3(A) = 10$, $\alpha \cdot \wrs_3 = 8.5$; \\
\indent
$A(a_4, :) = \{o_3, o_4\}, u_4(A) = 8$, $\alpha \cdot \wrs_4 = 5.0$; \\
\indent
$\sum_i u_i(A) = 48$.\\

Egalitarian: \\
\indent
$A(a_1, :) = \{o_1, o_4\}, u_1(A) = 11$; 
$A(a_2, :) = \{o_2, o_4\}, u_2(A) = 10$; \\
\indent
$A(a_3, :) = \{o_2, o_3\}, u_3(A) = 10$;
$A(a_4, :) = \{o_1, o_3\}, u_4(A) = 10$; \\
\indent
$\sum_i u_i(A) = 41$.\\

\end{example}

Inspecting the results of Example \ref{ex:disjoint}, we observe that in the set of all utilitarian maximal assignments have $a_1$ and $a_2$ each being assigned to $o_1$ and $o_2$, in the set of all \sumowa maximal assignments $a_3$ is assigned one of $o_1$ or $o_2$ while $a_2$ is assigned one of $o_3$ or $o_4$, while in the set of all egalitarian maximal assignments each of the agents receives one of either $o_1$ or $o_2$ along with one of $o_3$ or $o_4$.  Thus we observe the following.

\begin{observation}
The set of assignments returned by each of the three objective functions, utilitarian, egalitarian, and OWA, can be disjoint.
\end{observation}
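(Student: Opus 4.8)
\medskip
\noindent\emph{Proof plan.}
The claim is witnessed by the single instance of Example~\ref{ex:disjoint}; the plan is to verify the three displayed assignments and, more importantly, to pin down enough structure of the \emph{entire} optimal set of each objective on that instance to conclude that the three sets are pairwise disjoint. I would extract three structural facts: (a) the utilitarian optimum is \emph{unique} and equals the displayed one, so in every utilitarian optimum $a_2$ is assigned $\{o_1,o_2\}$; (b) in every egalitarian optimum each agent is assigned exactly one ``high'' object ($o_1$ or $o_2$) and one ``low'' object ($o_3$ or $o_4$), so in particular $a_1$ is assigned a low object; and (c) in every \sumowa optimum (here $\owa=(1,\nicefrac{1}{2})$) agent $a_1$ is assigned $\{o_1,o_2\}$ while $a_2$ is assigned some low object. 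Given these, disjointness is immediate: a common member of the utilitarian and egalitarian sets would need $a_1\mapsto\{o_1,o_2\}$ (by (a)) yet $a_1$ holding a low object (by (b)); a common member of the utilitarian and \sumowa sets would need $a_2\mapsto\{o_1,o_2\}$ and $a_2$ holding a low object; a common member of the \sumowa and egalitarian sets would need $a_1\mapsto\{o_1,o_2\}$ and $a_1$ holding a low object.

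For (a): since $\ocmin(j)=\ocmax(j)=2$ for every $j$, the utilitarian welfare of a feasible $A$ is exactly the sum over the four objects of the two utilities of the agents placed on that object, hence is bounded by the sum over objects of the best agent-pair sum for that object; these four numbers are $19,17,7,7$, so the welfare is at most $50$, and equality forces $\{a_1,a_2\}$ onto both $o_1,o_2$ and $\{a_3,a_4\}$ onto both $o_3,o_4$ (every other per-object choice loses at least $1$). That combination is feasible, so the utilitarian optimum is unique.

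For (b): I would first show the egalitarian value is exactly $10$. It is $\le 10$ because a minimum of $\ge 11$ would, upon scanning each agent's six feasible two-object bundles, force both $a_3$ and $a_4$ onto $\{o_1,o_2\}$, thereby filling $o_1$ and $o_2$ and stranding $a_1$ with $\{o_3,o_4\}$ of utility $0$; and it is $\ge 10$ by the displayed assignment. Then, in any feasible $A$ with minimum $\ge 10$, no agent can hold two low objects (the best two-low bundle is worth $8$ for $a_4$ and less for the others), and since there are exactly four low-object slots for the four agents, the pigeonhole principle forces every agent onto exactly one low object and hence exactly one high object.

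For (c): with $\owa=(1,\nicefrac{1}{2})$ and $\acmin(i)=\acmax(i)=2$, a short computation gives that the \sumowa objective of $A$ equals $\tfrac12\sum_{i}\bigl(2\,\wrs_{i,1}(A)+\wrs_{i,2}(A)\bigr)$, i.e.\ each agent's larger assigned utility is weighted twice and the smaller once. Tabulating $2\,\wrs_{i,1}+\wrs_{i,2}$ over the four agents and their six bundles, the displayed assignment scores $78$. The hard part, and the main obstacle, is showing that no feasible assignment with $a_1\not\mapsto\{o_1,o_2\}$ reaches $78$: using the $o_3\leftrightarrow o_4$ symmetry, $a_1$ has only three essentially different alternatives ($\{o_1,o_3\}$, $\{o_2,o_3\}$, $\{o_3,o_4\}$, scoring $22,18,0$ against $31$), and for each one I would optimize the residual three-agent, partially-capacitated sub-instance on $\{a_2,a_3,a_4\}$ by hand and check that the total never exceeds $75$. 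Hence every \sumowa optimum sets $a_1\mapsto\{o_1,o_2\}$; solving the residual sub-instance under that constraint then shows the only way to attain $78$ is to put $a_4\mapsto\{o_3,o_4\}$ and give each of $a_2,a_3$ one high and one low object, so $a_2$ always holds a low object. Everything outside this residual case analysis is a finite, mechanical check on the $4\times4$ utility table, and the residual problems are small transportation instances whose optima can, if preferred, be certified by LP duality rather than enumeration.
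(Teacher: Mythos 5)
Your proposal is correct and follows essentially the same route as the paper: it uses the very same four-agent, four-object instance (Example~\ref{ex:disjoint}) and derives disjointness from structural invariants of the three optimal sets (utilitarian optima force $a_1,a_2\mapsto\{o_1,o_2\}$; egalitarian optima force one high and one low object per agent; \sumowa optima force $a_1\mapsto\{o_1,o_2\}$ with $a_2$ holding a low object), which is exactly the characterization the paper asserts ``by inspection'' in the paragraph preceding the observation. The only difference is that you supply the explicit verification (per-object upper bound, pigeonhole on low slots, residual case analysis) that the paper omits, and your numbers check out.
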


There are instances where the set of \sumowa assignments is the same as the set of egalitarian assignments, but disjoint from the set of utilitarian assignments.  Hence, it is an interesting direction for future work to fully characterize \sumowa assignments and discover OWA vectors with nice properties.

\subsection{Pareto Optimality}
An allocation $S$ is \emph{more preferred by a given agent with respect to pairwise comparisons} than allocation $T$ if $S$ is a result of replacing an item in $T$ with a strictly more preferred item.
Note that the pairwise comparison relation is transitive. An allocation is \emph{Pareto optimal with respect to pairwise comparisons} if there exists no other allocation that each agent weakly prefers and at least one agent strictly prefers.

\begin{lemma}\label{lem:p}
Consider an agent $i$ and two allocations $S$ and $T$ of equal size. Then if $S$ is at least as preferred as $T$ by $i$ with respect to pairwise comparison, then $S$ yields at least as much OWA value as $T$ for \emph{any} OWA vector no matter if it is increasing or decreasing.
\end{lemma}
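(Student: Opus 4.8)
The plan is to reduce the statement to a coordinatewise domination of the sorted utility vectors, and then read off the conclusion from non-negativity of the OWA weights. First I would unwind the definition of the pairwise-comparison relation: since $S$ is at least as preferred as $T$ for agent $i$, either $S=T$, in which case the claim is immediate, or, by the definition together with transitivity, $S$ is obtained from $T$ by a finite sequence of single-item replacements, each swapping out an item for one that $i$ strictly prefers. Because $\uv_i$ is strictly decreasing in rank, ``strictly more preferred'' means strictly larger $\uv_i$-value, so every step deletes one utility value $a$ from the multiset of $i$'s utilities and inserts a value $a' > a$, while keeping the cardinality fixed at $|S| = |T|$.

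The core step is a claim about a single replacement: if a multiset $B$ of reals is obtained from a multiset $A$ of the same cardinality by deleting one element $a$ and adding one element $a' \geq a$, then the non-increasing rearrangements satisfy $B^{\downarrow}_k \geq A^{\downarrow}_k$ for every index $k$. I would prove this by a threshold-counting argument: for any real $t$, writing $N_X(t) = |\{x \in X : x \geq t\}|$ and using Iverson brackets, $N_B(t) = N_A(t) - [a \geq t] + [a' \geq t] \geq N_A(t)$, since $a' \geq a$ forces $[a' \geq t] \geq [a \geq t]$. The condition ``$N_B(t) \geq N_A(t)$ for all $t$'' is exactly equivalent to $B^{\downarrow}$ dominating $A^{\downarrow}$ coordinatewise (one has to state this equivalence carefully in the presence of ties). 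Composing this over the whole chain of swaps, which is legitimate because the relation is transitive and each swap preserves cardinality, yields $\wrs_i(S)_k \geq \wrs_i(T)_k$ for all $k$, with both vectors of common length $|S| = |T|$.

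Finally I would conclude: let $\owa = (\alpha_1, \ldots, \alpha_K)$ be any OWA vector with $K \geq |S|$; by definition of an OWA all $\alpha_k \geq 0$. Since $\wrs_i(S)$ and $\wrs_i(T)$ are already sorted non-increasingly, $OWA_{\owa}(\wrs_i(S)) = \sum_{k} \alpha_k\, \wrs_i(S)_k$ and likewise for $T$, so the coordinatewise inequality gives, termwise, $\sum_{k} \alpha_k\, \wrs_i(S)_k \geq \sum_{k} \alpha_k\, \wrs_i(T)_k$. At no point did we use that $\owa$ is non-increasing; only $\alpha_k \geq 0$ was needed, which is precisely why the statement holds whether the OWA weights are increasing or decreasing.

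I expect the only genuine obstacle to be the single-swap domination claim: the threshold-counting formulation makes it short, but one must be careful to state the equivalence between ``at least as many elements above every threshold'' and ``coordinatewise domination of the sorted vectors'' correctly when there are repeated utility values, and to check that each swap really does preserve the number of assigned items so that the two vectors being compared have the same length. Everything else is bookkeeping.
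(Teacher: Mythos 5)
Your proof is correct and follows the same overall strategy as the paper's: view $S$ as obtained from $T$ by a chain of single-item replacements, each substituting a weakly (here strictly) more preferred item, and argue that each such step can only increase the OWA value. The difference lies in how the key step is justified. The paper's proof observes that each replaced item's value stays the same or increases and then appeals to bilinearity of the OWA transform; this glosses over the fact that after a replacement the new, larger value may occupy a \emph{different position} in the non-increasing rearrangement $\wrs_i(\cdot)$, and hence be multiplied by a different weight $\alpha_k$ than the value it displaced. Your threshold-counting argument --- showing $N_B(t) \geq N_A(t)$ for every threshold $t$ and converting this into coordinatewise domination of the sorted vectors --- closes exactly that gap, and it also isolates the only property of the OWA vector actually used, namely $\alpha_k \geq 0$, which makes transparent why monotonicity of the weights is irrelevant. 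So your route is a more careful rendering of the same argument rather than a genuinely different one; what it buys is an airtight treatment of re-sorting and ties, at the cost of some extra bookkeeping about the equivalence between threshold counts and sorted-vector domination.
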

\begin{proof}
Note that $S$ can be viewed as a transformation from $T$ where each item $j$ is replaced by some other item $j'$ that is at least as preferred. Hence, the value of the item either stays the same or increases. In either case, the corresponding OWA multiplied with the value is the same. Since the OWA transform is bilinear, the total OWA score of $S$ is at least as much as that of $T$.
\end{proof}
        
\begin{proposition}
The \sumowa maximal assignment is Pareto optimal with respect to pairwise comparison irrespective of the OWA. 
\end{proposition}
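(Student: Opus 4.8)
The plan is a proof by contradiction that leans entirely on Lemma~\ref{lem:p}. Suppose $A$ is a \sumowa maximal assignment that is \emph{not} Pareto optimal with respect to pairwise comparison. Then by definition there is another feasible assignment $A'$ that every agent weakly prefers to $A$ with respect to pairwise comparison, with at least one agent $i^\star$ strictly preferring it. The first step is to observe that the pairwise-comparison relation only ever swaps one item for another, so it preserves allocation sizes: $|A'(i,:)| = |A(i,:)|$ for every $i$. This matters because it is exactly the equal-size hypothesis that Lemma~\ref{lem:p} requires, so no feasibility checking beyond what is already assumed for $A'$ is needed.

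Next I would apply Lemma~\ref{lem:p} agent by agent. For every $i$, since $A'(i,:)$ is at least as preferred as $A(i,:)$ under pairwise comparison and the two have equal size, the lemma gives $\owa \cdot \wrs_i(A') \ge \owa \cdot \wrs_i(A)$ for the (non-increasing, non-negative) OWA vector $\owa$. Summing over $i \in N$ yields $\sum_{i=1}^{n} \owa \cdot \wrs_i(A') \ge \sum_{i=1}^{n} \owa \cdot \wrs_i(A)$. To contradict the maximality of $A$ it then suffices to upgrade the inequality for the agent $i^\star$ to a strict one, i.e.\ to show $\owa \cdot \wrs_{i^\star}(A') > \owa \cdot \wrs_{i^\star}(A)$.

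This strict step is where the real work is, and I expect it to be the main obstacle. Here $A'(i^\star,:)$ comes from $A(i^\star,:)$ by a chain of replacements, each exchanging an item for a strictly more preferred one; since the utilities $\uv_{i^\star}$ are strictly decreasing in rank, ``strictly more preferred'' means strictly larger utility, so the chain strictly increases the multiset sum of utilities and weakly increases every order statistic, giving $\wrs_{i^\star}(A') \ge \wrs_{i^\star}(A)$ coordinatewise with a strict increase in at least one coordinate. Pairing with the non-negative weights $\owa$ then gives the desired strict inequality \emph{provided some strictly-increased coordinate carries positive weight}. For a strictly positive OWA (e.g.\ the PAV vector $(1, \nicefrac{1}{2}, \nicefrac{1}{3}, \dots)$) this is immediate; to handle an $\owa$ with trailing zeros one needs an extra argument — for instance reorganizing the improving chain so that some swap reaches a positively weighted position, or first passing to an assignment that is undominated in the (finite) pairwise-comparison order and is still \sumowa maximal. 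Once the strict inequality is secured, $\sum_i \owa \cdot \wrs_i(A') > \sum_i \owa \cdot \wrs_i(A)$ contradicts the \sumowa maximality of $A$, completing the proof.
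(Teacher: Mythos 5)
You follow the same route as the paper's own proof: assume a \sumowa maximal assignment $A$ is Pareto dominated with respect to pairwise comparisons by some $A'$, apply Lemma~\ref{lem:p} agent by agent to get $\owa \cdot \wrs_i(A') \geq \owa \cdot \wrs_i(A)$ for every $i$, sum, and try to derive a contradiction from a strict inequality for the strictly improving agent. The difference is that you are more careful than the paper at the one place where care is needed. The paper's proof simply asserts that the strictly preferring agent ``gets strictly more'' OWA score, but Lemma~\ref{lem:p} only delivers a weak inequality; the strict version is exactly the step you flag. It goes through whenever the coordinate that strictly increases carries positive weight --- automatic for strictly positive OWAs such as the PAV vector $(1, \nicefrac{1}{2}, \nicefrac{1}{3}, \ldots)$ --- but it can fail when $\owa$ has zero entries, which the paper explicitly permits (it only requires $\alpha_i \geq 0$). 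Concretely, with $\owa = (1,0)$ an agent whose sorted utility vector improves from $(3,1)$ to $(3,2)$ has unchanged OWA score, so a \sumowa maximal assignment can be tied with, and hence Pareto dominated by, another feasible assignment. The ``obstacle'' you identify is therefore not a defect of your write-up but a genuine gap in the claim as stated ``irrespective of the OWA,'' and your proposed repairs (restrict to positive weights, or select an assignment that is undominated in the pairwise-comparison order among the \sumowa maximizers) are the natural ways to close it. Your preliminary observation that pairwise comparison preserves allocation sizes, which is what licenses the equal-size hypothesis of Lemma~\ref{lem:p}, is likewise a detail the paper leaves implicit.
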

\begin{proof}
Assume for contradiction that a \sumowa maximal assignment $A$ is not Pareto optimal with respect to pairwise comparisons. From Lemma \ref{lem:p}, there exists another outcome $A'$ that each agent weakly prefers and at least one agent strictly prefers. But this means that in $A'$ each agent gets at least as much OWA score and at least one agent gets strictly more. But this contradicts the fact that $A$ is OWA maximal. 
\end{proof}

%
%
%

\section{An Algorithm for \sumowa assignments}\label{sec:algorithm}

We give an algorithm for finding \sumowa assignments using flow networks.  In this proof we use the most general formulation of our problem by allowing the values of the upper and lower per-agent capacities, $[\acmin(i), \acmax(i)]$, to vary for each agent; and the upper and lower object capacities, $[\ocmin(j), \ocmax(j)]$, to vary for each object.   

\begin{theorem}\label{thm:owa-p}
An \sumowa assignment can be found in polynomial time.
\end{theorem}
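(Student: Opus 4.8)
The plan is to reduce \textsc{\sumowa Assignment} to a minimum-cost flow problem with lower bounds on the edges --- a problem solvable in polynomial time by the textbook reduction to a min-cost circulation --- where the non-increasing OWA vector $\owa$ is used to linearise the ordered objective into additive ``per-slot'' costs. Throughout I would assume, as in the motivating bidding/Borda settings, that the utilities $\uv_i(\cdot)$ are non-negative; this turns out to be what makes the linearisation faithful, and I would flag the arbitrary real-valued case as a point deserving separate care (a global shift of utilities does not preserve the objective once $\acmin(i)<\acmax(i)$, since the correction term $c\sum_{t\le|A(i,:)|}\alpha_t$ depends on how many objects $i$ receives).

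Concretely, I would build a directed acyclic network with a source $s$, a sink $z$, a node $a_i$ per agent, a node $o_j$ per object, a node $p_{i,t}$ for each agent $i$ and each ``slot'' $t\in\{1,\dots,\acmax(i)\}$ (at most $|\owa|$ of them, hence polynomially many), and a node $q_{i,j}$ for each acceptable agent--object pair. The edges are: $s\to a_i$ with lower bound $\acmin(i)$, upper bound $\acmax(i)$, cost $0$; $a_i\to p_{i,t}$ with capacity $1$, cost $0$; $p_{i,t}\to q_{i,j}$ with capacity $1$ and cost $-\,\alpha_t\,\uv_i(r_i(j))$; $q_{i,j}\to o_j$ with capacity $1$, cost $0$; and $o_j\to z$ with lower bound $\ocmin(j)$, upper bound $\ocmax(j)$, cost $0$. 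An integral feasible $s$--$z$ flow then corresponds exactly to a feasible assignment: capacity $1$ on $q_{i,j}$ forbids giving $j$ to $i$ twice, capacity $1$ on each $p_{i,t}$ forces $i$'s objects into distinct slots, and the bounds on the $s$- and $z$-edges are precisely the agent and object capacities; the flow's cost equals $-\sum_i\sum_{t\in S_i}\alpha_t\,\uv_i(r_i(\text{object of }i\text{ in slot }t))$, where $S_i$ is the set of slots $i$ uses.

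The crux --- and the step I expect to be the main obstacle --- is to prove that the minimum cost over all feasible flows equals $-\max_{A\in\all}\sum_i\owa\cdot\wrs_i(A)$, i.e.\ that the slot costs faithfully encode the ordered objective. One inclusion is routine: given a feasible assignment $A$, place agent $i$'s objects into slots $1,\dots,|A(i,:)|$ in non-increasing utility order, obtaining a feasible flow of cost exactly $-\sum_i\owa\cdot\wrs_i(A)$, so $\min_f\mathrm{cost}(f)\le-\max_A\sum_i\owa\cdot\wrs_i(A)$. For the converse I would use a rearrangement/exchange argument: because $\owa$ is non-increasing and the utilities are non-negative, among all ways to pair the multiset of utilities agent $i$ receives with an equal-size sub-multiset of the entries of $\owa$, the value is maximised by taking the largest entries $\alpha_1,\dots,\alpha_{|A(i,:)|}$ paired in sorted order --- which is exactly $\owa\cdot\wrs_i(A)$. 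Hence the assignment $A_f$ read off from \emph{any} feasible flow $f$ (recording only which $o_j$ each unit reaches and from which $a_i$) satisfies $\sum_i\owa\cdot\wrs_i(A_f)\ge-\,\mathrm{cost}(f)$, giving the reverse inequality. Combining the two, an optimal integral flow has cost $-\mathrm{OPT}$ and its induced assignment attains $\mathrm{OPT}$; since the network has $O(nm+n|\owa|)$ nodes and $O(nm|\owa|)$ edges and min-cost flow with lower bounds is polynomial-time solvable, the theorem follows. As a sanity check, setting every entry of $\owa$ to $1$ makes each slot cost $-\uv_i(r_i(j))$ and recovers the utilitarian assignment, consistent with the paper's earlier claim.
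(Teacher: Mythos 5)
Your construction is essentially identical to the paper's: the slot nodes $p_{i,t}$, the pair nodes $q_{i,j}$, and the cost $-\alpha_t\,\uv_i(r_i(j))$ on the slot-to-pair edges reproduce the paper's per-agent gadget (its $\alpha_k$, $o_j\alpha_k$, and $a_io_j$ layers) together with its source and sink edges carrying the agent and object capacity bounds, so this is the same reduction to min-cost feasible flow with lower bounds. Your two-sided rearrangement/exchange argument is a more careful rendering of the paper's informal claim that the $\alpha$-slots ``fill in order,'' and your caveat that non-negative utilities are needed for the linearisation to be faithful is a legitimate point the paper glosses over.
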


\begin{proof}
We reduce our problem to the problem of finding a minimum cost feasible flow in a graph with upper and lower capacities on the edges, which is a polynomial time solvable problem.  In addition to being polynomial time solvable, we know that the flow is integral as long as all edge capacities are integral, even if we have real valued costs \cite{AMO93a}.
Figures \ref{fig:main} and \ref{fig:agent} provide a high level view of the flow network that we will construct.

\begin{figure}
\centering
\scalebox{0.68}{
\begin{tikzpicture}[
    >=stealth',
    shorten >= 1pt,
    auto,
    node distance=2cm,
    semithick,
    bend angle=10,
    graybox/.style = {draw=gray!20, fill=gray!20, rounded corners},
    line/.style = {->, draw=black, thick},
    box/.style = {circle, draw=blue!50, fill=blue!20, minimum size=4mm},
    ball/.style = {ellipse, draw=blue!50, very thick, dashed, fill=blue!10, minimum size=4mm}
    ]
 
    \coordinate (S) at (-6cm, 0cm);
 
    \coordinate (S1) at (-3cm, 2cm);
    \coordinate (S2) at (-3cm, 1cm);
    \coordinate (S3) at (-3cm, 0cm);
    \coordinate (S4) at (-3cm, -2cm);
    
    \coordinate (G1) at (0cm, 2cm);
    \coordinate (G2) at (0cm, 1cm);
    \coordinate (G3) at (0cm, 0cm);
    \coordinate (G4) at (0cm, -2cm);
 
    \coordinate (M1) at (3cm, 2cm);
    \coordinate (M2) at (3cm,  1cm);
    \coordinate (M3) at (3cm, 0cm);
    \coordinate (M4) at (3cm, -2cm);
 
    \coordinate (T) at (6cm, 0cm);
 
    \node (BBox) [graybox, minimum width=1cm, minimum height=5cm] at (-3cm, 0cm) {};
    \node [above] at (BBox.90) {$N$};
 
    \node (BBox2) [graybox, minimum width=1cm, minimum height=5cm] at (3cm, 0cm) {};
    \node [above] at (BBox2.90) {$O$};
 
    \node (Sbox)  [box] at (S)  {$s$};
    \node (Sbox1) [box] at (S1) {$a_1$};
    \node (Sbox2) [box] at (S2) {$a_2$};
    \node (Sbox3) [box] at (S3) {$a_3$};
    \node (Sbox4) [box] at (S4) {$a_n$};
    
    \node (Ga1) [ball] at (G1) {Gadget $a_1$};
    \node (Ga2) [ball] at (G2) {Gadget $a_2$};
    \node (Ga3) [ball] at (G3) {Gadget $a_3$};
    \node (Ga4) [ball] at (G4) {Gadget $a_n$};
     
    \node (Mbox1) [box] at (M1) {$o_1$};
    \node (Mbox2) [box] at (M2) {$o_2$};
    \node (Mbox3) [box] at (M3) {$o_3$};
    \node (Mbox4) [box] at (M4) {$o_m$};
     
    \node (Tbox) [box] at (T) {$t$};
     
    \fill [black] (-3cm,  -.8cm) circle (1.2pt);
    \fill [black] (-3cm,   -1cm) circle (1.2pt);
    \fill [black] (-3cm, -1.2cm) circle (1.2pt);
 
    \fill [black] (3cm, -.8cm) circle (1.2pt);
    \fill [black] (3cm, -1cm) circle (1.2pt);
    \fill [black] (3cm, -1.2cm) circle (1.2pt);

    \path[line] (Sbox) |- node [pos=0.7, above, sloped] {\scriptsize $C = [\acmin(1), \acmax(1)]$} (Sbox1);
    \path[line] (Sbox) -- node [pos=0.5, above, sloped] {\scriptsize $C = [\acmin(2), \acmax(2)]$} (Sbox2);
    \path[line] (Sbox) -- node [pos=0.5, below, sloped] {\scriptsize $C = [\acmin(3), \acmax(3)]$} (Sbox3);
    \path[line] (Sbox) |- node [pos=0.7, below, sloped] {\scriptsize $C = [\acmin(n), \acmax(n)]$} (Sbox4);
 
       \path[line] (Sbox1.55) -- node [] {} (Ga1.160);
       \path[line] (Sbox1.34) -- node [] {} (Ga1.170);
       \path[line] (Sbox1.-55) -- node [pos=0.2, above] {\textbf{\vdots}} (Ga1.200);
       
      \path[line] (Sbox2.55) -- node [] {} (Ga2.160);
       \path[line] (Sbox2.34) -- node [] {} (Ga2.170);
       \path[line] (Sbox2.-55) -- node [pos=0.2, above] {\textbf{\vdots}} (Ga2.200);
       
      \path[line] (Sbox3.55) -- node [] {} (Ga3.160);
       \path[line] (Sbox3.34) -- node [] {} (Ga3.170);
       \path[line] (Sbox3.-55) -- node [pos=0.2, above] {\textbf{\vdots}} (Ga3.200);
       
      \path[line] (Sbox4.55) -- node [] {} (Ga4.160);
       \path[line] (Sbox4.34) -- node [] {} (Ga4.170);
       \path[line] (Sbox4.-55) -- node [pos=0.2, above] {\textbf{\vdots}} (Ga4.200);
       
       \path[line] (Ga1.10) -- node [] {} (Mbox1);
       \path[line] (Ga1.5) -- node [] {} (Mbox2);
       \path[line] (Ga1.-5) -- node [] {} (Mbox3);
       \path[line] (Ga1.-10) -- node [] {} (Mbox4);
       
       \path[line] (Ga2.10) -- node [] {} (Mbox1);
       \path[line] (Ga2.5) -- node [] {} (Mbox2);
       \path[line] (Ga2.-5) -- node [] {} (Mbox3);
       \path[line] (Ga2.-10) -- node [] {} (Mbox4);
       
       \path[line] (Ga3.10) -- node [] {} (Mbox1);
       \path[line] (Ga3.5) -- node [] {} (Mbox2);
       \path[line] (Ga3.-5) -- node [] {} (Mbox3);
       \path[line] (Ga3.-10) -- node [] {} (Mbox4);
       
       \path[line] (Ga4.10) -- node [] {} (Mbox1);
       \path[line] (Ga4.5) -- node [] {} (Mbox2);
       \path[line] (Ga4.-5) -- node [] {} (Mbox3);
       \path[line] (Ga4.-10) -- node [] {} (Mbox4);       
 
    \path[line] (Mbox1) -| node [pos=0.3, above, sloped] {\scriptsize $C = [\ocmin(1), \ocmax(1)]$} (Tbox);
    \path[line] (Mbox2) -- node [pos=0.5, above, sloped] {\scriptsize $C = [\ocmin(2), \ocmax(2)]$} (Tbox);
    \path[line] (Mbox3) -- node [pos=0.5, below, sloped] {\scriptsize $C = [\ocmin(3), \ocmax(3)]$} (Tbox);
    \path[line] (Mbox4) -| node [pos=0.3, below, sloped] {\scriptsize $C = [\ocmin(m), \ocmax(m)]$} (Tbox);

\end{tikzpicture}
}
\caption{Main gadget for the reduction which enforces the agent and object capacity constraints.}
\label{fig:main}
\end{figure}
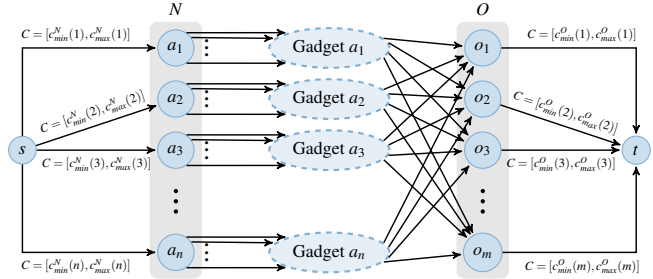

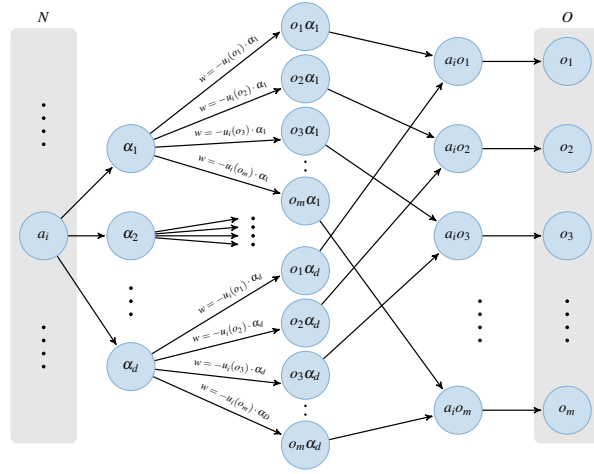
\begin{figure}
\centering
\scalebox{0.58}{
\begin{tikzpicture}[
    >=stealth',
    shorten >= 1pt,
    auto,
    node distance=2cm,
    semithick,
    bend angle=10,
    graybox/.style = {draw=gray!20, fill=gray!20, rounded corners},
    line/.style = {->, draw=black, thick},
    box/.style = {circle, draw=blue!50, fill=blue!20, minimum size=11mm, inner sep=0pt},
    ball/.style = {ellipse, draw=blue!50, very thick, dashed, fill=blue!10, minimum size=4mm}
    ]
 
    \coordinate (A) at (-6cm, 0cm);
 
    \coordinate (C1) at (-4cm, 2cm);
    \coordinate (C2) at (-4cm, 0cm);
    \coordinate (C4) at (-4cm, -3cm);
        
    \coordinate (G1) at (0cm, 4.8cm);
    \coordinate (G2) at (0cm, 3.6cm);
    \coordinate (G3) at (0cm, 2.4cm);
    \coordinate (G4) at (0cm, 0.8cm);
   
    \coordinate (d1) at (-1.5cm, 0.4cm);
    \coordinate (d2) at (-1.5cm, 0.2cm);
    \coordinate (d3) at (-1.5cm, 0cm);
    \coordinate (d4) at (-1.5cm, -0.2cm);

    \coordinate (G5) at (0cm, -0.8cm);
    \coordinate (G6) at (0cm, -2.0cm);
    \coordinate (G7) at (0cm, -3.2cm);
    \coordinate (G8) at (0cm, -4.8cm);
 
    \coordinate (AO1) at (3.5cm, 4cm);
    \coordinate (AO2) at (3.5cm,  2cm);
    \coordinate (AO3) at (3.5cm, 0cm);
    \coordinate (AO4) at (3.5cm, -4cm);
 
    \coordinate (O1) at (6cm, 4cm);
    \coordinate (O2) at (6cm,  2cm);
    \coordinate (O3) at (6cm, 0cm);
    \coordinate (O4) at (6cm, -4cm);

    \node (BBox) [graybox, minimum width=1.5cm, minimum height=9.5cm] at (-6cm, 0cm) {};
    \node [above] at (BBox.90) {$N$};
 
    \node (BBox2) [graybox, minimum width=1.5cm, minimum height=9.5cm] at (6cm, 0cm) {};
    \node [above] at (BBox2.90) {$O$};
 
    \node (Sbox)  [box] at (A)  {$a_i$};
    
    \node (Sbox1) [box] at (C1) {$\alpha_1$};
    \node (Sbox2) [box] at (C2) {$\alpha_2$};
    \node (Sbox4) [box] at (C4) {$\alpha_{d}$};

    \node (oc1) [box] at (G1) {$o_1\alpha_1$};
    \node (oc2) [box] at (G2) {$o_2\alpha_1$};
    \node (oc3) [box] at (G3) {$o_3\alpha_1$};
    \node (oc4) [box] at (G4) {$o_m\alpha_1$};
    
    \node (oc5) [box] at (G5) {$o_1\alpha_{d}$};
    \node (oc6) [box] at (G6) {$o_2\alpha_{d}$};
    \node (oc7) [box] at (G7) {$o_3\alpha_{d}$};
    \node (oc8) [box] at (G8) {$o_m\alpha_{d}$};
     
    \node (ao1) [box] at (AO1) {$a_io_1$};
    \node (ao2) [box] at (AO2) {$a_io_2$};
    \node (ao3) [box] at (AO3) {$a_io_3$};
    \node (ao4) [box] at (AO4) {$a_io_m$}; 
     
    \node (Mbox1) [box] at (O1) {$o_1$};
    \node (Mbox2) [box] at (O2) {$o_2$};
    \node (Mbox3) [box] at (O3) {$o_3$};
    \node (Mbox4) [box] at (O4) {$o_m$};
     

    \fill [black] (0cm,  1.7cm) circle (0.8pt);
    \fill [black] (0cm,   1.5cm) circle (0.8pt);
    
    \fill [black] (0cm,  -3.9cm) circle (0.8pt);
    \fill [black] (0cm,   -4.1cm) circle (0.8pt);
     
    \fill [black] (-4cm,  -1.2cm) circle (1.2pt);
    \fill [black] (-4cm,   -1.5cm) circle (1.2pt);
    \fill [black] (-4cm, -1.8cm) circle (1.2pt);
 
    \fill [black] (-1.2cm,  .4cm) circle (1.2pt);
    \fill [black] (-1.2cm,   .2cm) circle (1.2pt);
    \fill [black] (-1.2cm, 0cm) circle (1.2pt);
    \fill [black] (-1.2cm, -.2cm) circle (1.2pt);
 
    \fill [black] (4cm, -1.5cm) circle (1.2pt);
    \fill [black] (4cm, -1.8cm) circle (1.2pt);
    \fill [black] (4cm, -2.1cm) circle (1.2pt);
    \fill [black] (4cm, -2.4cm) circle (1.2pt);
 
    \fill [black] (6cm, -1.5cm) circle (1.2pt);
    \fill [black] (6cm, -1.8cm) circle (1.2pt);
    \fill [black] (6cm, -2.1cm) circle (1.2pt);
    \fill [black] (6cm, -2.4cm) circle (1.2pt);
    
    \fill [black] (-6cm, 3cm) circle (1.2pt);
    \fill [black] (-6cm, 2.7cm) circle (1.2pt);
    \fill [black] (-6cm, 2.4cm) circle (1.2pt);
    \fill [black] (-6cm, 2.1cm) circle (1.2pt);
    
    \fill [black] (-6cm, -3cm) circle (1.2pt);
    \fill [black] (-6cm, -2.7cm) circle (1.2pt);
    \fill [black] (-6cm, -2.4cm) circle (1.2pt);
    \fill [black] (-6cm, -2.1cm) circle (1.2pt);

    \path[line] (Sbox) -- node [] {} (Sbox1);
    \path[line] (Sbox) -- node [] {} (Sbox2);
    \path[line] (Sbox) -- node [] {} (Sbox4);
    
    \path[line] (Sbox1) -- node [pos=0.65, above, sloped] {\scriptsize $w = -u_i(o_1) \cdot \alpha_1$} (oc1.190);
    \path[line] (Sbox1) -- node [pos=0.65, above, sloped] {\scriptsize $w = -u_i(o_2) \cdot \alpha_1$} (oc2);
    \path[line] (Sbox1) -- node [pos=0.6, above, sloped] {\scriptsize $w = -u_i(o_3) \cdot \alpha_1$} (oc3.200);
    \path[line] (Sbox1) -- node [pos=0.6, above, sloped] {\scriptsize $w = -u_i(o_m) \cdot \alpha_1$} (oc4);
    
    \path[line] (Sbox2) -- node [] {} (d1);
    \path[line] (Sbox2) -- node [] {} (d2);
    \path[line] (Sbox2) -- node [] {} (d3);
    \path[line] (Sbox2) -- node [] {} (d4);

    \path[line] (Sbox4) -- node [pos=0.65, above, sloped] {\scriptsize $w = -u_i(o_1) \cdot \alpha_{d}$} (oc5.190);
    \path[line] (Sbox4) -- node [pos=0.60, above, sloped] {\scriptsize $w = -u_i(o_2) \cdot \alpha_{d}$} (oc6);
    \path[line] (Sbox4) -- node [pos=0.58, above, sloped] {\scriptsize $w = -u_i(o_3) \cdot \alpha_{d}$} (oc7.200);
    \path[line] (Sbox4) -- node [pos=0.6, above, sloped] {\scriptsize $w = -u_i(o_m) \cdot \alpha_{D}$} (oc8);

    \path[line] (oc1) -- node [pos=0.4, above] {} (ao1);
    \path[line] (oc2) -- node [pos=0.4, above] {} (ao2);
    \path[line] (oc3) -- node [pos=0.4, above] {} (ao3);
    \path[line] (oc4) -- node [pos=0.4, above] {} (ao4);
 
    \path[line] (oc5) -- node [pos=0.4, above] {} (ao1);
    \path[line] (oc6) -- node [pos=0.4, above] {} (ao2);
    \path[line] (oc7) -- node [pos=0.4, above] {} (ao3);
    \path[line] (oc8) -- node [pos=0.4, above] {} (ao4);

   
    \path[line] (ao1) -- node [] {} (Mbox1);
    \path[line] (ao2) -- node [] {} (Mbox2);
    \path[line] (ao3) -- node [] {} (Mbox3);
    \path[line] (ao4) -- node [] {} (Mbox4);

\end{tikzpicture}
}
\caption{The per agent gadget. Note that all costs on edges are $0$ and all capacities are [0,1] unless otherwise noted.}
\label{fig:agent}
\end{figure}

In Figure \ref{fig:main} we first build a tripartite graph with two sets of nodes and one set of gadgets per agent: the agent nodes, one for each agent $a_i$; the agent gadgets, one (illustrated in Figure \ref{fig:agent}) for each agent $a_i$; and the object nodes, one for each object $o_j$. There is an edge from the source node $s$ to each of the agent nodes, each with cost 0, minimum flow capacity $\acmin(i)$ and a maximum flow capacity $\acmax(i)$.  This set of edges and nodes enforces the constraint that each $a_i$ has capacity $[\acmin(i), \acmax(i)]$.  We also construct an edge from each object node to the sink $t$.  Each of these edges has a cost 0, a minimum capacity $\ocmin(j)$, and a maximum capacity $\ocmax(j)$.  This set of edges enforces the constraint that each $o_j$ has capacity $[\ocmin(j), \ocmax(j)]$.

We now turn to the agent gadget depicted in Figure \ref{fig:agent} for arbitrary $a_i$. The leftmost node and the rightmost set of nodes in Figure \ref{fig:agent} correspond to the agent nodes $N$ and object nodes $O$ in Figure \ref{fig:main}, respectively. In each agent gadget we create a tripartite sub-graph with the agent node $a_i$ serving as the source and the set of object nodes $O$ serving as the sinks.

We create three layers of nodes which we describe in turn from left to right.  First, we create a set of decision nodes with labels $\alpha_1, \ldots, \alpha_{d}$ where $ \acmax(i) \leq d \leq |\alpha|$. Intuitively, we will be multiplying the OWA value $\alpha_1$ by the utility for some object, so we need to keep track of all the values that could result.  The arcs from $a_i$ to each of the nodes in this set has upper capacity 1, minimum capacity 0, and cost 0.  If we have the case that $\acmax(i) < d$ then we can set the maximum capacity of the edges to node(s) $\alpha_j, j>\acmax(i)$ to 0.  This enforces that each value in the OWA vector can modify at most one utility value.
  
For each of the decision nodes $\alpha_1, \ldots, \alpha_{d}$ constructed, we create a set of object/decision nodes for each $o_j$ which we denote $o_j\alpha_k$.  From each of the decision nodes $\alpha_1, \ldots, \alpha_{d}$ we create an edge to each of the object/decision nodes created for this particular decision node $\alpha_k$, i.e., $o_1\alpha_1, o_2\alpha_1, \ldots, o_m\alpha_1$ for $\alpha_1$.  Each of these edges has maximum capacity 1 and a cost equal to $-1 \cdot u_i(o_j) \cdot \alpha_1$ for rank 1 and object $o_j \in O$. These costs are the (negative) cost that matching agent $a_i$ with object $o_j$ at weighted rank $d_k$ contributes to the OWA objective.

Finally, we create one set of agent/object nodes, one for each $o_j$ denoted $a_io_j$. From all the object/decision nodes we connect all nodes with a label of $o_j$ to the corresponding agent/paper node, i.e., $o_1\alpha_1, o_1\alpha_2, \ldots o_1\alpha_d$ all connect to $a_io_1$ with cost 0 and maximum capacity 1. We then connect the agent/object node to the corresponding object node in the main construction from $O$, i.e., $a_io_1$ to $o_1$ with cost 0 and maximum capacity 1.  This set of nodes and edges enforces that each agent can be assigned each object once.

We can extract an assignment from the minimum cost feasible flow by observing that paper $o_j$ is allocated to agent $a_i$ if and only if there is a unit of flow passing from the particular agent/object node $a_io_j$ to the object node $o_j$.  We now argue for the correctness of our algorithm in two steps, (1) that all constraints for the \sumowa assignment problem are enforced and (2) that a minimum cost feasible flow in the constructed graph gives an \sumowa assignment. For (1) we note that since the units of flow across the graph represent the assignment and we have explained how the capacity constraints on all edges enforce each of the particular constraints imposed by our definition of a feasible assignment, there is a feasible flow iff the flow satisfies the constraints.

For (2) observe that for each agent, the $\alpha$ nodes fill with flow in order from $\alpha_1$ to $\alpha_d$ as the OWA vector is non-increasing and the utilities are decreasing, i.e., for each agent, the edge costs monotonically increase from the edges associated with $\alpha_1$ to the edges associated with $\alpha_d$.  Thus, for each agent, the first unit of flow to this agent will use the least cost (most negative) edge must be associated with $\alpha_1$; and similarly for $\alpha_{2}$ through $\alpha_{d}$.  From the capacity constraints we know there is only one unit of flow that enters each decision node $\alpha_i$ and there is only one unit of flow that can leave each agent/paper node $a_io_j$.  This means that each $\alpha_i$ can modify only one $o_j$ and each $o_j$ selected must be unique for this agent.

As the decision nodes are filled in order and $\alpha_i$ can only modify the value for a single object, we know the total cost of the flow across the agent gadget for each $a_i$ is equal to $-1 \cdot \owa_i \cdot \wrs_i$.  Hence, the price of the min cost flow across all agents is equal to $-1 \cdot \sum_{\forall i \in N} \owa_i \cdot \wrs_i(A)$. Thus, the min cost flow in the graph is an \sumowa assignment.
\end{proof}

\subsection{Generalizations}

We observe two possible generalizations of the above construction which allow us to use this constructive proof for more general instances than the CPAP. First, The proof above can be generalized to allow for $\alpha$ to vary for each agent.  Specifically, observe that the decision nodes for each agent $a_i$ are independent from all other agents.  This means that, for each agent (or a class of agents) we could use an OWA vector $\alpha^{a_{i}}$.  This ability may be useful, for instance, when a group of agents reports the same extreme utility distribution and the organizer wishes to apply the same transform to these utilities.  

The second generalization that we can make to the above construction is to allow each agent to be assigned to each object more than once.  While this ability does not make sense in the reviewers/papers setting (unless there are sub reviewers) there could be other capacitated assignment settings where we may wish to assign the agents to objects multiple times e.g., if there are discrete jobs that need to be done a certain number of times but and a single agent can be assigned the same job multiple times.

In order to generalize the capacity constraint from 1 for each agent $i$ for each object $j$ we introduce a capacity upper bound $z_{i,j}$ which encodes the number of times that agent $i$ can be assigned to object $j$.  Taking $z_{i,j} = 1$ for all $i$ and $j$ gives us the original CPAP setting.  In order to enforce this constraint, within each agent gadget (Figure \ref{fig:agent}) we add a capacity constraint equal to $z_{i,j}$ from each edge $a_io_j$ to $o_j$.  If we want a lower bound for the number of copies of $o_j$ assigned to $a_i$ we can encode this lower bound on this edge as well.

We can extract an assignment from the minimum cost feasible flow by observing that paper $o_j$ is allocated to agent $a_i$ $z_{ij}$ times if and only if there are units of flow passing from the particular agent/object node $a_io_j$ to the object node $o_j$.  The argument for correctness follows exactly from the proof of Theorem \ref{thm:owa-p} above.

\begin{corollary}
An \sumowa assignment can be found in polynomial time even if each agent $a_i$ has a unique OWA vector $\alpha^{a_i}$ and each object $o_j$ can be assigned to each agent $a_i$ any number of times (not just once).
\end{corollary}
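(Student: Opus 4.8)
The plan is to reuse the flow network of Theorem~\ref{thm:owa-p} essentially verbatim and fold in the two modifications already sketched in the ``Generalizations'' discussion, then verify that neither modification disturbs the correctness argument. First I would incorporate the per-agent OWA vectors. Since the agent gadget of Figure~\ref{fig:agent} attached to $a_i$ shares no node and no edge with any other agent's gadget, it suffices to build the $k$-th decision node inside $a_i$'s gadget using $\alpha^{a_i}_k$ in place of a global $\alpha_k$, so that the cost on the edge from the decision node $\alpha^{a_i}_k$ to the object/decision node $o_j\alpha^{a_i}_k$ becomes $-u_i(o_j)\cdot\alpha^{a_i}_k$. Nothing elsewhere in the network changes, and the number of decision nodes of $a_i$'s gadget is bounded by $|\alpha^{a_i}|$, which by the problem definition is at least $\acmax(i)$; so the network stays polynomially sized.

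Next I would handle multiple copies exactly as in the discussion: put a capacity upper bound $z_{i,j}$ (and, if wanted, a lower bound) on the edge $a_io_j \to o_j$ inside $a_i$'s gadget, with $z_{i,j}=1$ recovering the CPAP case. The crucial point is that the capacity-$1$ constraint on every object/decision node $o_j\alpha^{a_i}_k$ is retained, so the (at most) $z_{i,j}$ units of flow that may pass through $a_io_j$ must enter it from $z_{i,j}$ \emph{distinct} nodes $o_j\alpha^{a_i}_{k_1},\ldots,o_j\alpha^{a_i}_{k_{z_{i,j}}}$; equivalently, the $\ell$-th copy of $o_j$ allocated to $a_i$ is charged against a distinct OWA weight. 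The assignment is extracted as before, except that $o_j$ is now allocated to $a_i$ as many times as there are units of flow on $a_io_j\to o_j$.

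It then remains to re-run the two-part correctness argument. Part (1), that feasible flows correspond exactly to feasible assignments, is immediate: the added $z_{i,j}$ bounds are precisely the per-(agent, object) multiplicity constraints, and the capacity bounds on the $s\to a_i$ and $o_j\to t$ edges are untouched. For part (2) I would re-verify the ``decision nodes fill in order'' claim: within $a_i$'s gadget the edge costs are still monotone increasing as $k$ runs from $1$ to $d$, because $\alpha^{a_i}$ is non-increasing, $\alpha^{a_i}_k\geq 0$, and the utilities are positive; hence a min cost flow routes the largest utility it sends to $a_i$ through $\alpha^{a_i}_1$, the next largest through $\alpha^{a_i}_2$, and so on. This remains true when a value $u_i(o_j)$ occurs several times in the routed multiset because $o_j$ is taken multiply, so the cost across $a_i$'s gadget equals $-\alpha^{a_i}\cdot\wrs_i(A)$ with $\wrs_i(A)$ now the sorted multiset of utilities \emph{with repetitions}. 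Summing over agents shows the min cost feasible flow is a \sumowa assignment, and polynomiality follows since the enlarged network still has $O(nm|\alpha|)$ nodes and edges and min cost feasible flow with integral capacities and real costs is polynomial~\cite{AMO93a}. The only place anything could go wrong is this last point — confirming that allowing repeated objects does not break the greedy pairing of sorted utilities with decreasing OWA weights — and since that argument depends only on monotonicity of the gadget edge costs, which is unaffected by repetitions, the merge of the two edits goes through.
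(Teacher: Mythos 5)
Your proposal matches the paper's own argument essentially verbatim: it exploits the independence of the per-agent gadgets to substitute $\alpha^{a_i}_k$ into the edge costs, places the multiplicity bound $z_{i,j}$ on the edge from $a_io_j$ to $o_j$, and then observes that the two-part correctness argument of Theorem~\ref{thm:owa-p} carries over unchanged. Your additional check that repeated copies of $o_j$ are necessarily charged against distinct OWA weights (via the unit capacities on the $o_j\alpha_k \to a_io_j$ edges) is a welcome bit of extra care that the paper leaves implicit.
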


\section{Experiments}
\label{sec:experiments}

We now turn to the question of how good are \sumowa assignments in practice? We answer this question using real world data from three large international conferences (MD-00002-00000001 -- 00000003) from \textsc{www.PrefLib.org} \cite{MaWa13a}. We focus discussion on MD-00002-00000003 which has 146 agents and 175 objects. We implemented the algorithm given in Section \ref{sec:algorithm} using networkX for Python and Lemon for C++.  However, we still have a run time $\approx O(V^4)$, giving runtime $\approx (150^2 \cdot 3)^4) = 2 \times 10^{19}$, which caused our computers to crash even with 16GB of memory.  This was quite disappointing as we thought the flow argument could be used to solve this problem on real-world instances.

Not to be deterred, we still wanted to investigate the assignments we get from \sumowa   compare to the utilitarian and egalitarian assignments.  Consequently, we implemented the model as an MIP in Gurobi 7.0 and it ran in under 1 minute for all instances and settings using 4 cores. Our MIP is similar to the one given by \citet{SFL16a} and the MARA MIP by \citet{BoChLa16}.  However, as we have capacity constraints and individual/variable length OWAs, our MIP is more general than either.

To encode the \sumowa problem we introduce a binary variable $x_{a,o}$ indicating that agent $a$ is assigned object $o$.  We introduce a real valued variable $u_{owa,a}$ which is the \sumowa utility for agent $a$.
Finally, we introduce $r_{a,o,p}$ for the OWA matrix which notes that agent $a$ is assigned object $o$ at OWA rank $p$.  The MIP is given below.

{\small
\vspace{-1.5mm}
\[
\begin{array}{|ll|l|l|}%
\hline
\text{max} 		& \sum_{a \in A} \sum_{o \in O, p \in P} u_{a}(o) \cdot \alpha_{p} \cdot r_{a,o,p} & & \text{Description:}  \\
\text{s.t.} 	& \ocmin(o) \leq \sum_{a \in A} x_{a,o} \leq \ocmax(o) & \forall o \in O & \text{(1) Object Capacities}\\
				& \acmin(a) \leq \sum_{o \in O} x_{a,o} \leq \acmax(a) & \forall a \in A & \text{(2) Agent Capacities} \\
  				& \sum_{p \in P} r_{a,o,p} \leq 1						& \forall a \in A, \forall o \in O & \text{(3) One Object per OWA Rank}\\
  				& \sum_{o \in O} r_{a,o,p} \leq 1		 				& \forall a \in A, \forall p \in P & \text{(4) Objects Have One Rank} \\
				& \sum_{p \in P} r_{a,o,p} \geq x_{a,o}	 				& \forall a \in A, \forall o \in O & \text{(5) Assignment to OWA Link Fcn.} \\
				& \sum_{o \in O} r_{a,o,p} \geq \sum_{o \in O} r_{a,o,p+1}& \forall a \in A, \forall p \in P & \text{(6) Ranks Fill in Increasing Order} \\
				& \sum_{o \in O} r_{a,o,p} \cdot u_{a}(o) \geq \sum_{o \in O} r_{a,o,p+1} \cdot u_{a}(o) & \forall a \in A, \forall p \in P & \text{(7) Agent Utility Must Be Decreasing} \\
\hline
\end{array}
\]
}

Constraints (1)--(4) enforce the cardinality constraints on the agents, objects, and OWA rank matrix.  Constraint (5) links the agent and object assignments to be positions in the OWA rank matrix.  Line (6) enforces that the rank matrix fills from the first position to the $\acmax$ position for each agent.  And finally (7) enforces that the \sumowa value of the assignment positions in the rank matrix must be decreasing.  We then maximize the sum over all agents of the OWA objective value.

We found the utilitarian, egalitarian, and \sumowa assignments for each of the real world datasets when each object must receive 3--4 reviews and each agent must review 6--7 objects.  In the data, each agent sorts the papers into 4 equivalence classes which we gave utility values $(5, 3, 1, 0)$.  We use the PAV inspired decreasing harmonic OWA vector $(1, \nicefrac{1}{2}, \nicefrac{1}{3}, \ldots)$ to compute the \sumowa assignment.

One of the reasons we wanted to use the \sumowa assignment is to allow the market designer to enforce a more equitable distribution of papers with respect to the ranks. Hence, our test statistic is the number of top ranked items that the average agent can expect to receive.  Figure \ref{fig:cdf} shows the agent counts and the cumulative distribution function (CDF) for the number of top ranked items the agents receive.  

Looking at the left side of the figure, we see that 71 agents receive 5 top ranked papers under the \sumowa assignment while under the utilitarian assignment only 46 do.  Under the utilitarian assignment 35 agents receive more than 5 top ranked papers.  Consequently, on average, agents can expect to get 4 top ranked papers in the \sumowa assignment, 3 in the egalitarian assignment, and 4.2 in the utilitarian assignemnt.  However under the utilitarian assignment, several agents receive an entire set of top ranked objects, while the egalitarian assignment modulates this so that most agents only receive 3--4 top ranked items.  In contrast, the \sumowa assignment is a balance between these with the most agents receiving 5 top ranked items.

\begin{figure}[H] \centering
\includegraphics[width=\linewidth]{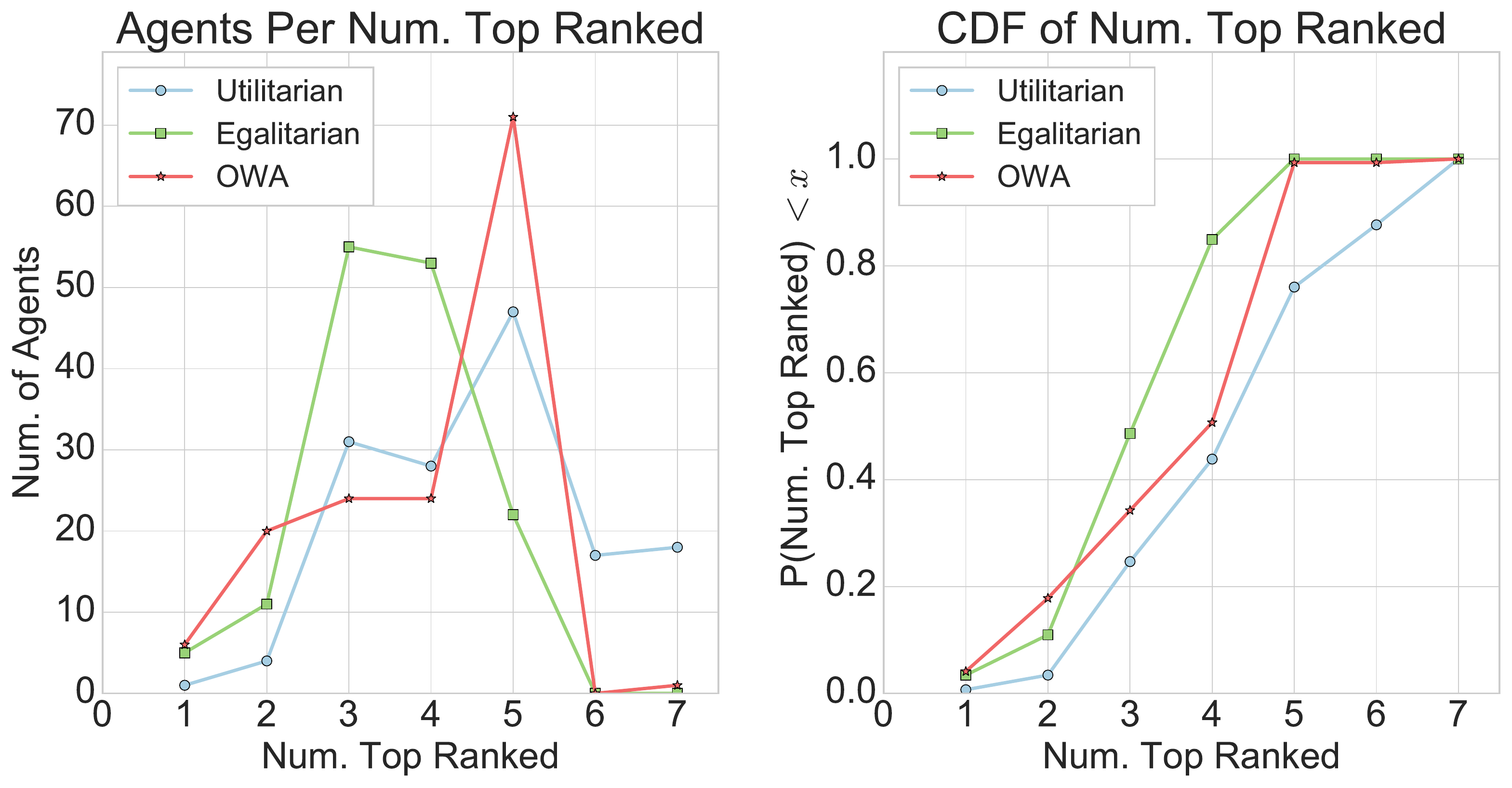}
\caption{The count of agents receiving $x$ top ranked papers (top) and the cumulative distribution function (CDF) (bottom) for the number of agents being assigned $x$ top ranked objects for MD-00002-00000003.  Though 100\% of the agents receive between 1 and 5 top ranked items for the egalitarian and \sumowa assignments (CDF), the PDF shows that the most agents receive the most top ranked items under the \sumowa assignment.}
\label{fig:cdf}
\end{figure}

\section{Conclusions}
We have proposed and provided algorithms for the novel notion of a \sumowa assignment. The \sumowa assignment using decreasing OWA vectors gives the central organizer a ``slider'' to move from utility maximizing towards a more rank maximal assignment computationally efficient package.  An important open question for future work is to find axiomatic characterizations for good OWA vectors.  Additionally, the OWA method, and all methods for CPAP that we surveyed, treat objects as having positive utility.  It is generally the case that reviewers at a conference want to review fewer, not more, papers.  Consequently it would be interesting to study CPAP from the point of view of \emph{chores}, as they are called in the economics literature.


\end{document}